% This is samplepaper.tex, a sample chapter demonstrating the
% LLNCS macro package for Springer Computer Science proceedings;
% Version 2.21 of 2022/01/12
%
\documentclass[runningheads]{llncs}
\usepackage[T1]{fontenc}
% T1 fonts will be used to generate the final print and online PDFs,
% so please use T1 fonts in your manuscript whenever possible.
% Other font encodings may result in incorrect characters.
%
\usepackage{graphicx}
\usepackage{orcidlink}
\usepackage{multirow}
\usepackage{booktabs}
\usepackage{amsmath, amssymb}
\usepackage{xcolor}
\usepackage{marvosym}

% Used for displaying a sample figure. If possible, figure files should
% be included in EPS format.
%
% If you use the hyperref package, please uncomment the following two lines
% to display URLs in blue roman font according to Springer's eBook style:
%\usepackage{color}
%\renewcommand\UrlFont{\color{blue}\rmfamily}
%\urlstyle{rm}
%

\newcommand{\pos}[1]{\textcolor{green!50!black}{\textbf{#1}}}

\begin{document}
\title{LGCA: Enhancing Semantic Representation via Progressive Expansion}
%
%\titlerunning{Abbreviated paper title}
% If the paper title is too long for the running head, you can set
% an abbreviated paper title here
%
\author{Cao Thanh Hieu\inst{1,3}\orcidlink{0009-0002-3618-529X} \and
Tran Trung Khang\inst{2,3}\orcidlink{0009-0007-4753-0948} \and
Pham Gia Thinh\inst{1,3}\orcidlink{0009-0009-2465-6833}\and
Diep Tuong Nghiem\inst{1,3}\orcidlink{0000-0001-7406-1250}\and
Nguyen Thanh Binh\inst{1,3}\textsuperscript{(\Letter)}\orcidlink{0000-0001-5249-9702}}
\authorrunning{T. H. Cao et al.}
% First names are abbreviated in the running head.
% If there are more than two authors, 'et al.' is used.
%
\institute{University of Science, Vietnam National University Ho Chi Minh City, Vietnam \and
National University of Singapore, Singapore \and 
AISIA Lab, Ho Chi Minh City, Vietnam
}
\maketitle              % typeset the header of the contribution
\begin{abstract}
Recent advancements in large-scale pretraining in natural language processing have enabled pretrained vision-language models such as CLIP to effectively align images and text, significantly improving performance in zero-shot image classification tasks. Subsequent studies have further demonstrated that cropping images into smaller regions and using large language models to generate multiple descriptions for each caption can further enhance model performance. However, due to the inherent sensitivity of CLIP, random image crops can introduce misinformation and bias, as many images share similar features at small scales. To address this issue, we propose Localized-Globalized Cross-Alignment (LGCA), a framework that first captures the local features of an image and then repeatedly selects the most salient regions and expands them. The similarity score is designed to incorporate both the original and expanded images, enabling the model to capture both local and global features while minimizing misinformation. Additionally, we provide a theoretical analysis demonstrating that the time complexity of LGCA remains the same as that of the original model prior to the repeated expansion process, highlighting its efficiency and scalability. Extensive experiments demonstrate that our method substantially improves zero-shot performance across diverse datasets, outperforming state-of-the-art baselines.

\keywords{Zero-shot \and Cross-Alignment \and Image-Expansion}
\end{abstract}

\section{Introduction}\label{sec: Intro}
\begin{figure}[t]
    \centering
    \includegraphics[width=0.95\linewidth]{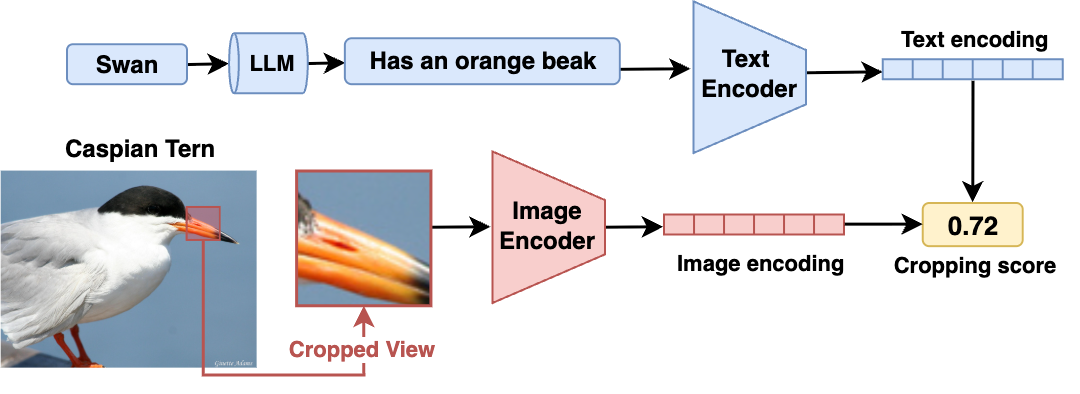}
    \caption{An illustrative case when random cropping introduces misleading similarity. Consider an image of a Caspian Tern paired with a caption of a swan. The LLM-generated description for the swan includes the phrase “has an orange beak.” Due to random cropping, the model captures only the beak region of the Caspian Tern, which also appears orange. This results in a high similarity score of 0.72, thereby distorting the overall similarity assessment.}
    \label{fig:no_exp} 
\end{figure}
Zero-shot classification between images and text seeks to align visual content with natural language in a shared latent space. This task has gained momentum thanks to large-scale pretraining in NLP \cite{devlin1810pre,radford2018improving,radford2019language,mann2020language}, enabling vision-language models (VLMs) such as CLIP \cite{radford2021learning} to achieve strong cross-modal understanding. However, CLIP’s performance is highly sensitive to prompt phrasing at inference \cite{radford2021learning,zhou2022learning}. For example, \cite{zhou2022learning} showed that altering “a photo of [CLASS]” to “a photo of a [CLASS]” improved accuracy by up to 6\%. This reliance on prompt engineering, often domain-specific and time-consuming, limits the model’s practical deployment \cite{zhou2022learning}.\\
To address this issue, \cite{menon2022visual} and \cite{pratt2023does} proposed leveraging large language models (LLMs) to automatically generate multiple refined text descriptions for each category. This approach alleviates the need for extensive manual prompt engineering, while also eliminating the requirement for additional fine-tuning. As a result, it enables models to maintain their generalization capabilities, which is particularly important in the context of prompt-learning methods. Research by \cite{li2022ordinalclip,wang2022learning,wu2023pi,tanwisuth2023pouf} highlights that these methods are prone to overfitting on the training data, making generalization a critical challenge.\\
LLM-based visual-text alignment typically emphasizes global matching, aligning text with the entire image, which is not always optimal. \cite{li2024visual} showed that fine-grained descriptions often map better to specific regions (e.g., “a woodpecker has a straight and pointed bill”) than to the whole image, but such localized focus can harm overall performance by ignoring broader context. To address this, they proposed the Weighted Visual-Text Cross-Alignment framework: images are divided into localized regions via random cropping, each region is weighted by its similarity to the full image, and then cross-aligned with caption descriptions (see Figures \ref{fig:img_crop} and \ref{fig:text_crop}; details in Section \ref{sec: Prem}). A key limitation of this approach is its sensitivity to the choice of cropped regions, much like CLIP’s sensitivity to prompt phrasing. For example, if a caption describes a swan’s orange beak but a crop instead captures the beak of a Caspian tern, the model may assign a high similarity score due to overlapping features. Such cases risk misinformation by incorrectly aligning image-text pairs, as illustrated in Figure \ref{fig:no_exp}.\\
To address this challenge, we propose \textbf{L}ocalized-\textbf{G}eneralized \textbf{C}ross-\textbf{A}lignment (LGCA). This method resolves the aforementioned issue by initially focusing on local crops of the image to capture fine-grained details. It then identifies the most salient local regions based on similarity scores, expands the image in both directions, and feeds it back into the model. This expansion process repeats, each time selecting the most important subset from the expanded image of the previous iteration. The final similarity score incorporates both the initial and expanded images through a weighted sum (More details on the model design are in Section~\ref{sec: Method}). This process enables the model to capture both local and global patterns when comparing with a single prompt, thereby minimizing the biases introduced by similar features across different images. A key feature of LGCA is that, while it outperforms multiple baselines (see Section~\ref{sec: Exp}) by effectively capturing both local and global features of the image data, the time complexity remains comparable to that of the non-expanding model, increasing by at most a factor of 
$\log(\text{number of images} \cdot \text{number of captions})$ (see Section~\ref{sec: TimeAna}). In summary, our contributions are threefold.
\begin{enumerate}

    \item We propose LGCA, a framework designed to capture both local and global features of image data in the task of zero-shot image classification.
    \item We conduct experiments across multiple datasets and baselines to validate the performance of LGCA.
    \item We perform a theoretical analysis of the time complexity of LGCA and demonstrate that it maintains the same complexity as its initial non-expanding version.
\end{enumerate}

\section{Related Work} \label{sec: RW}

\subsection{Vision-Language Model}

Large-scale image-text pretraining has enabled vision-language models (VLMs) to learn robust representations for diverse tasks \cite{kim2021vilt,cho2021unifying,li2021align,xue2021probing}. CLIP \cite{radford2021learning}, trained on 400M image-text pairs, demonstrated strong zero-shot transfer and cross-modal generalization. Similarly, ALIGN \cite{jia2021scaling} showed that even noisy pretraining data can yield high-quality representations at scale. Building on this paradigm, models like FLAVA \cite{singh2022flava}, Florence \cite{yuan2021florence}, and BLIP \cite{li2022blip} advanced multimodal transformers and contrastive pretraining. More recent works like Kosmos2 \cite{peng2023kosmos}, LLaVA \cite{liu2023visual}, Qwen-VL \cite{bai2023versatile}, and Molmo \cite{deitke2024molmo}, further extend this direction through cross-attention for deeper fusion, generative pretraining for multimodal reasoning, and instruction tuning for better alignment with natural language queries.

\subsection{Prompting strategies for vision-language model}
\textbf{Text-Guided Prompting.} CLIP has proven effective for zero-shot tasks, but follow-up studies \cite{radford2021learning,zhou2022learning} show its performance is highly sensitive to prompt design, which often requires extensive manual tuning. To mitigate this, one research direction \cite{menon2022visual,pratt2023does} leverages LLMs like GPT-3 \cite{brown2020language} to generate class-specific descriptions that highlight discriminative features, thereby improving cross-modal alignment. Alternatively, WaffleCLIP \cite{roth2023waffling} bypasses LLMs by constructing prompts from random character n-grams, yet still achieves competitive results, revealing CLIP’s surprising robustness to nonsensical prompts. \\
\textbf{Image-Guided Prompting.} Visual prompting, the counterpart of textual prompting, steers model predictions by modifying the visual input itself. Lightweight methods like RedCircle \cite{shtedritski2023does} bias attention by simply encircling objects, though they require manual effort. More advanced approaches, such as FGVP \cite{yang2023fine}, reduce annotation needs by leveraging segmentation models \cite{kirillov2023segment} and Blur Reverse Masks to refine object boundaries, trading human supervision for system complexity. Another line of work explores image-cropping strategies that generate and rank candidate regions to highlight informative views \cite{nishi2009,han2022cropmix,thapa2024dragonfly}. Baseline methods often rely on random or multi-crop sampling \cite{liwca2024,zhongclipcrop2023}, combining many small with a few large crops to form a cheap but effective ensemble over viewpoint and scale.
\noindent \begin{figure}[t]
    \centering
    \includegraphics[width=1\linewidth]{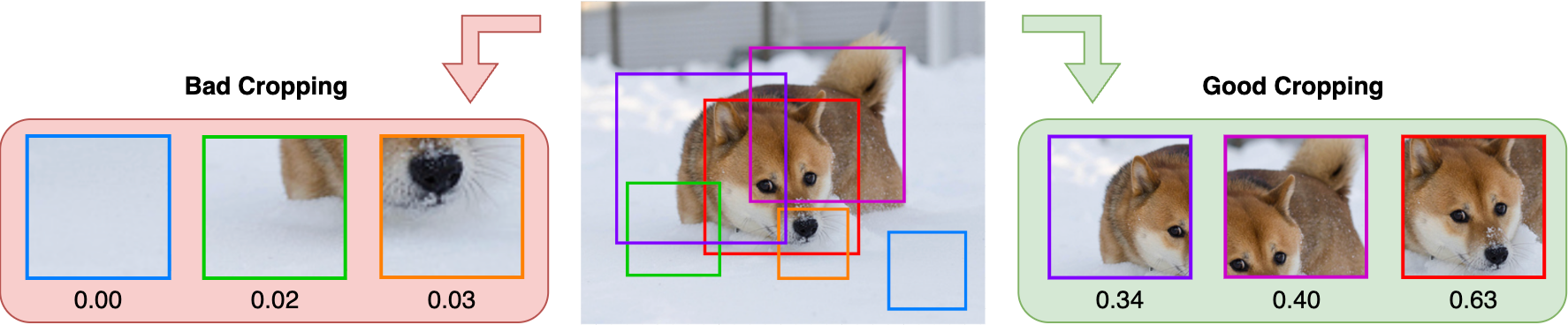}
    \caption{Local images are generated through cropping, with each crop weighted by its cosine similarity to the original image, indicating its level of correlation. In the illustration, the middle shows the original image, the left depicts low-correlation crops, and the right shows high-correlation crops.}
    \label{fig:img_crop} 
\end{figure}\\
\noindent\textbf{Test-time Prompt Tuning. }Test-time Prompt Tuning (TPT) \cite{shu2022tpt} adapts VLMs at inference by optimizing prompts with augmented views of test samples. It enables zero-shot generalization without labeled data and has shown strong performance in image classification \cite{feng2023diverse,abdul2023align,ma2023swapprompt,zhao2023test,zhang2024historical}. TPT refines prompts by enforcing consistency across a sample and its augmentations, but this requires multiple views and raises memory costs. WCA \cite{liwca2024} reduces this overhead by leveraging the inherent alignment ability of pretrained VLMs with labels' description prompts. Moreover, naive augmentations often yield overly simplistic variations, motivating our approach that uses cropping and progressive expansion during testing to better preserve semantics and avoid misleading small-scale features (see Figure \ref{fig:no_exp} and Section \ref{sec: Method}).

\section{Problem Formulation and Preliminaries} \label{sec: Prem}

\subsection{Problem Formulation} \label{Subsec: PS}

Let $\mathcal{I}$ denote the image domain and $\mathcal{L}$ the label domain, where labels are natural language tokens such as $\{\text{cat}, \text{dog}, \dots\}$. A pre-trained vision-language model consists of an image encoder $\phi: \mathcal{I} \to \mathbb{R}^k$ and a text encoder $\psi: \mathcal{L} \to \mathbb{R}^k$, mapping inputs into a joint $k$-dimensional embedding space. Here, $i \in \mathcal{I}$ represents an image and $c \in \mathcal{L}$ a candidate label.  
The zero-shot classification task then seeks to assign the most appropriate label $c$ to $i$, purely by comparing embeddings in this common space, while keeping the parameters of the pre-trained encoders frozen. In what follows, we outline approaches that are commonly employed to address this zero-shot classification problem.

\subsection{CLIP Zero-shot Transfer} \label{Subsec: Zero-shot Trans}

Following the work of \cite{radford2021learning}, the objective in zero-shot classification is to evaluate how well an image aligns with each candidate label by defining a scoring function $s: \mathcal{I} \times \mathcal{L} \to \mathbb{R}$. This function quantifies the semantic compatibility between an image $i \in \mathcal{I}$ and a label $l \in \mathcal{L}$. In practice, the score is obtained by comparing their encoder outputs using cosine similarity:
\begin{equation}
s(i, l \mid \phi, \psi) = \cos(\phi(i), \psi(l)).
\end{equation}
A larger value of $s(i, l)$ indicates stronger semantic correspondence between the image and the label. Consequently, classification reduces to selecting the label with the maximum score,
\[
l^{*} = \arg\max_{l \in \mathcal{L}} s(i, l),
\]
which assigns $i$ to the label whose textual representation is most closely aligned with its visual embedding.

\subsection{Enhancing Zero-shot Transfer Using Augmentation}
To further improve upon the method discussed in Subsection \ref{Subsec: Zero-shot Trans}, the works by \cite{menon2022visual,pratt2023does,liwca2024} introduced several approaches to enhance data representation through augmentation. These methods include\\
\textbf{Textual Augmentation.} For each category $l \in \mathcal{L}$, a large language model, denoted as $h(\cdot)$, can be employed to automatically generate multiple textual variants that elaborate on the defining attributes of the class. Instead of relying on a single label name, the LLM provides a diverse set of natural language descriptions that capture different perspectives of the same concept (e.g., visual features, contextual cues, or typical usage scenarios). Formally, the generated collection is written as
\begin{figure}[t]
    \centering
    \includegraphics[width=0.8\linewidth]{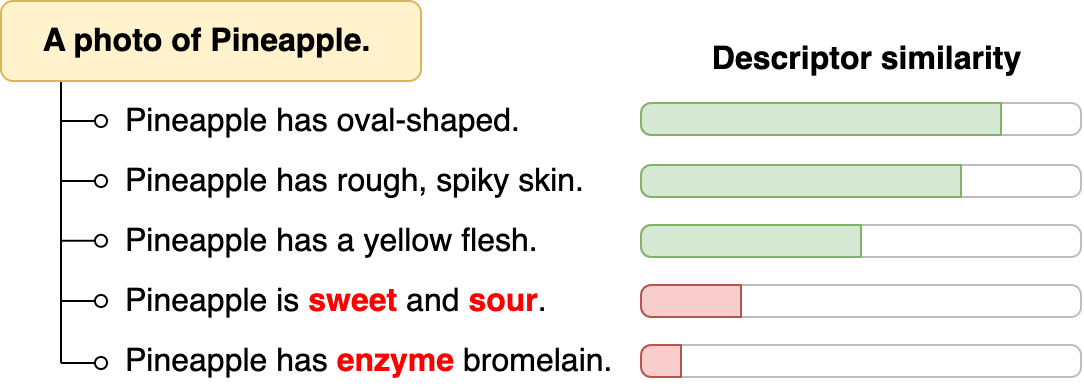}
    \caption{Visualization of similarity scores of text descriptions to the prompt “A photo of Pineapple.” Longer green lines indicate higher relevance, while shorter red lines mark low or incorrect matches. Descriptions that are irrelevant or incorrect are highlighted in red for clarity.}
    \label{fig:text_crop} 
\end{figure}
\begin{equation}
h(l) = \{\, l_j \,\}_{j=1}^{M},
\end{equation}
where $M$ is the number of synthesized descriptions and each $l_j$ corresponds to a semantically enriched prompt derived from the base label $l$. \\
\textbf{Visual Augmentation.} Random cropping is a widely used augmentation technique to improve the robustness of visual tasks \cite{liwca2024}. Given an image $i \in \mathcal{I}$ with width $w$ and height $h$, a crop size proportional to the smaller dimension is sampled, controlled by a parameter $\alpha \in (0,1)$. Formally, the operation is defined as
\begin{equation}
a(i, \alpha) = \{\, i_j \,\}_{j=1}^{N},
\end{equation}
where each $i_j$ is obtained by selecting a square subregion of side length
$\rho \cdot \min(h, w)$ with $\rho \sim \mathcal{U}(\alpha, 0.9)$, and resizing it to the original resolution. This generates $N$ variants of $i$, emphasizing different local regions or object parts.\\
\textbf{Weighted Aggregation.} To assess the relevance between original data and augmented data, image-to-image weights and text-to-text weights are introduced. Specifically, for image patches, we define a weight set 
\begin{equation}
\mathcal{W}_i = \{ w_{i_j} \}_{j=1}^{N},
\end{equation}
where $w_{i_j}$ reflects the significance of the $j$-th cropped variant of image $i$. Similarly, for textual descriptions, we assign weights
\begin{equation}
\mathcal{V}_c = \{ v_{c_j} \}_{j=1}^{M},
\end{equation}
where $v_{c_j}$ indicates the relevance of the $j$-th LLM-generated description $c_j$ for class $c$. Visualizations of applying weights to cropped images and to caption descriptions are shown in Figure \ref{fig:img_crop} and Figure \ref{fig:text_crop}, respectively. These weights not only address the uncertainty introduced by random cropping but also enable the model to emphasize the most informative visual and textual elements during cross-modal alignment. 
\section{Methodology} \label{sec: Method}

In this section, we formally introduce our proposed method, LGCA. The overall pipeline is illustrated in Figure \ref{fig:pipeline}. Specifically, we first describe a Cropping and Weight Assigning process for a specific image and caption, then we formalize the definition of an Expansion step, which serves as the foundation of our framework. We then describe how to combine these Expansion steps to obtain the overall model.
\subsection{Cropping and Weight Assignment}
\label{Subsec:CropWeight}
Recalling the definitions from Subsection~\ref{Subsec: PS}, we further introduce a cropping number $N$ and description number $M$, which specify the number of crops generated per image and the number of alternative descriptions generated per caption, respectively. For each image $i \in \mathcal{I}$, we construct a cropped image set $\mathcal{C}_i$ and an associated weight set $\mathcal{W}_i$. 
Specifically, we generate $N$ cropped versions of $i$ by applying a localized cropping function,  
\begin{equation}
\mathcal{C}_i \;=\; \left\{\, c_j = \phi\!\big(i, \gamma_j \min(H_i, W_i)\big) \;\middle|\; j = 1,\ldots,N \,\right\}, 
\end{equation}
where $H_i$ and $W_i$ denote the height and width of the image $i$, respectively, $\gamma_j$ is sampled from a uniform distribution $U(\alpha,\beta)$, and $\phi(\cdot)$ denotes the cropping operator. 
Each $c_j \in \mathcal{C}_i$ is assigned a weight relative to the original image $i$ by
\begin{equation}
w_j \;=\; \frac{\exp(s(c_j, i))}{\sum_{c \in \mathcal{C}_i} \exp(s(c, i))}, 
\end{equation}
where $s(\cdot,\cdot)$ is a similarity function. 
The resulting weights $\{w_j\}_{j=1}^N$ form the set $\mathcal{W}_i$. Similarly, for each caption $l \in \mathcal{L}$, we construct a set of alternative descriptions $\mathcal{D}_l$ and corresponding weights $\mathcal{V}_l$. 
We employ a large language model to produce $M$ descriptions of $l$, denoted $\{d_1, \ldots, d_M\}$, forming the set $\mathcal{D}_l$. 
Each description $d_j \in \mathcal{D}_l$ is  assigned a weight relative to the original caption $l$ by
\begin{equation}
v_j \;=\; \frac{\exp(s(d_j, l))}{\sum_{d \in \mathcal{D}_l} \exp(s(d, l))}.
\end{equation}
The resulting weights $\{v_j\}_{j=1}^M$ form the set $\mathcal{V}_l$.

\subsection{Expansion Step} 
In an expansion step, the model takes as input a set of cropped images $\mathcal{C}$, a set of descriptions $\mathcal{D}$, a set of image weights $\mathcal{W}$, a set of description weights $\mathcal{V}$, and a positive integer \texttt{topK}. 
First, each cropped image $c_s \in \mathcal{C}$ is passed through an image encoder to obtain an embedding vector $\hat{c}_s$ for $s = 1, \ldots, |\mathcal{C}|$, while each description $d_t \in \mathcal{D}$ is processed by a text encoder to produce an embedding $\hat{d}_t$ for $t = 1, \ldots, |\mathcal{D}|$. 
These embeddings are used to construct a cross-alignment matrix $\mathcal{A} \in \mathbb{R}^{|\mathcal{C}|\times|\mathcal{D}|}$ with entries
\[
\mathcal{A}_{s,t} \;=\; w_s \, v_t \, (\hat{c}_s^\top \hat{d}_t),
\]
where $w_s \in \mathcal{W}$ is the weight associated with image $c_s$ and $v_t \in \mathcal{V}$ is the weight associated with description $d_t$. 
The sum of all entries of $\mathcal{A}$ is set to the input scalar \texttt{score}
\[
\texttt{score} \;=\;  \sum_{s,t} \mathcal{A}_{s,t}.
\]
\begin{figure}[t]
    \centering
    \includegraphics[width=1\linewidth]{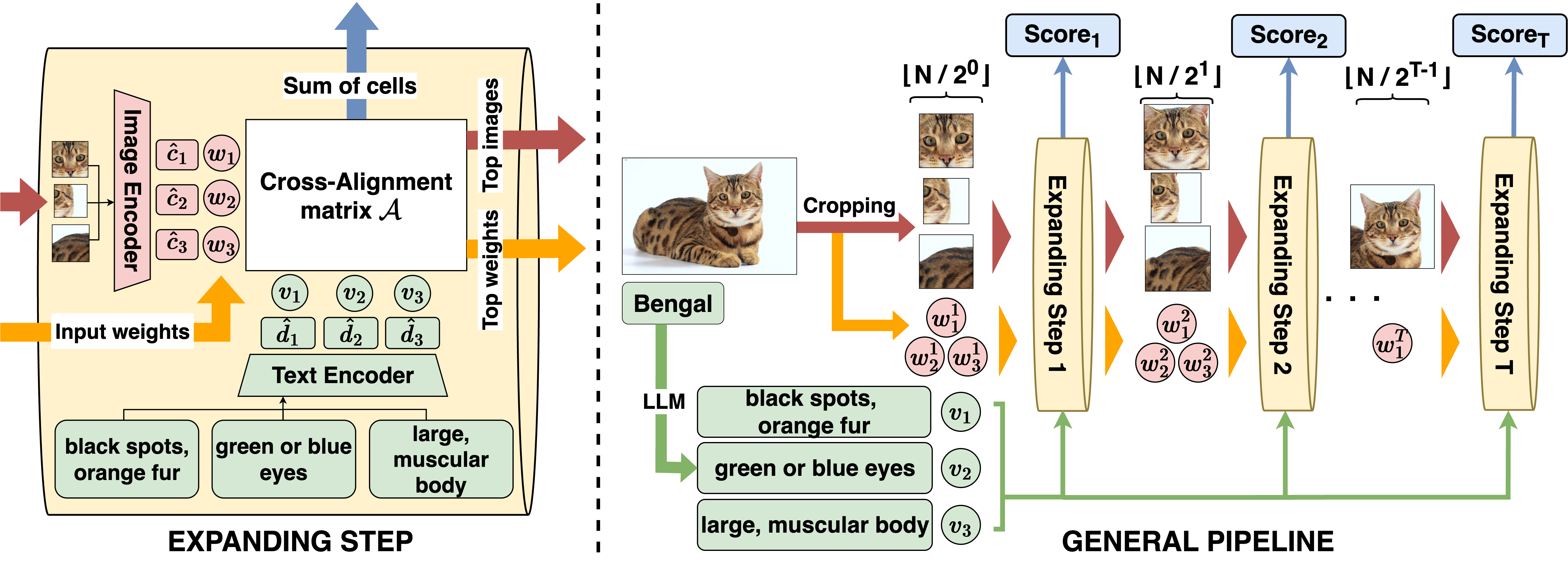}
    \caption{Left: Visualization of an Expansion Step. Right: Visualization of our general pipeline with $T$ Expansion Steps.}
    \label{fig:pipeline} 
\end{figure}
Next, the $\texttt{topK}$ largest entries of $\mathcal{A}$ are selected, and their indices are collected into a set $\mathcal{Z} \subseteq \{1,\dots,|\mathcal{C}|\}\times\{1,\dots,|\mathcal{D}|\}$. 
From this set, we derive the subset of images 
$
\hat{\mathcal{C}} \;=\; \{\, c_s \in \mathcal{C} \;|\; \exists\,t \;\text{such that}\; (s,t)\in \mathcal{Z} \,\}.
$ It should be noted that the cardinality of $\hat{\mathcal{C}}$ could be smaller than $K$ due to repetition. Then, each image in $\hat{\mathcal{C}}$ is spatially expanded within its original high-resolution frame. 
Concretely, for each $c_s \in \hat{\mathcal{C}}$, the image is expanded in both the horizontal and vertical directions by a fixed margin $\tau$, which serves to enlarge the cropped region within its parent image. 
The resulting expanded images are collected into the output set $\mathcal{C}_{\text{output}}$. Finally, for each image in $\mathcal{C}_{\text{output}}$, we re-calculate its weight with the initial image with the same formula stated in Subsection \ref{Subsec:CropWeight} to get the new image weight set $\hat{\mathcal{W}}$. The Expansion Step outputs the expanded image set $\mathcal{C}_{\text{output}}$, the score \texttt{score}, and the new image weights $\hat{\mathcal{W}}$, which are then propagated to the next stage of the LGCA pipeline. Visualization of the Expansion Step are shown in Figure~\ref{fig:pipeline}-Left, while examples are given in Figure~\ref{fig: dataset}.
\subsection{Overall Structure of LGCA}

We now introduce the overall structure of our proposed method, LGCA.  
Let $\mathcal{I}$ denote the set of images and $\mathcal{L}$ the set of captions.  
Given a cropping number $N$ and an expansion rate $\tau$, LGCA computes the similarity between an image $i \in \mathcal{I}$ and a caption $l \in \mathcal{L}$ as follows. First, we apply the cropping procedure with cropping number $N$ to generate:
$
\mathcal{C}_i, \;\mathcal{W}_i, \;\mathcal{D}_l, \;\mathcal{V}_l,
$
where $\mathcal{C}_i$ denotes the cropped image regions of $i$, $\mathcal{W}_i$ their associated weights, and $\mathcal{D}_l, \mathcal{V}_l$ are the cropped caption segments and their embeddings, respectively.  

Next, we determine the number of iterations $T$ by choosing the largest integer $T$ such that
$
\left\lfloor \frac{N}{2^T} \right\rfloor = 1.
$
We perform $T$ Expansion Steps. At iteration $j \in \{1, \dots, T\}$, LGCA takes as input the cropped image set $\mathcal{C}_i^{(j-1)}$, and the image weights $\mathcal{W}_i^{(j-1)}$ from the previous step, together with the caption descriptions and weights $(\mathcal{D}_l, \mathcal{V}_l)$. At this step, LGCA choose the positive integer $\texttt{topK}$ to be $ \left\lfloor \frac{N}{2^{j}} \right\rfloor$. The Expansion Step then outputs $
\mathcal{C}_i^{(j)}, \;\;\mathcal{W}_i^{(j)}, \;\;\texttt{score}^{(j)}
$, which are used for the next iteration. After finishing all $T$ steps, the similarity between image $i$ and caption $l$ is computed as a weighted sum of the intermediate scores:
\[
\text{Sim}(i,l) = \sum_{j=1}^{T} \alpha_j \cdot \texttt{score}^{(j)},
\]
where $\texttt{score}^{(j)}$ is the score at step $j$ and the weights $\{\alpha_j\}_{j=1}^T$ are hyperparameters fine-tuned for each experiment. We denote this similarity between image $i$ and caption $l$ by LGCA$(i,l)$. \\
\textbf{Image-caption matching procedure.} To complete the zero-shot image classification task, we assign each image 
$i \in \mathcal{I}$ to the label that maximizes its similarity under LGCA.  
Formally, for each $i \in \mathcal{I}$, the predicted label is given by
\[
l_i \;=\; \arg\max_{l \in \mathcal{L}} \; \text{LGCA}(i,l),
\]
\section{Time Complexity of the Expansion Step} \label{sec: TimeAna}
Recalling the definitions from Subsection \ref{Subsec: PS}, we define a non-expanding model $\mathbf{Q}$. For each image-caption pair $(i, l)$, $\mathbf{Q}$ applies only the Cropping and Weight Assignment steps, encodes the cropped images and descriptions, computes weights, and forms the Cross-Alignment matrix. The overall similarity score is given by the sum of all entries in this matrix, followed by an image-caption matching procedure analogous to LGCA. Thus, $\mathbf{Q}$ can be seen as a variant of LGCA without the Expansion Step, as in \cite{li2024visual}. The following theorem establishes the complexity relationship between LGCA$_{\mathbf{Q}}$ and $\mathbf{Q}$.
\begin{theorem}\label{theo: main theorem}
     Consider a non-expanding model $\mathbf{Q}$. Let $I, L \in \mathbb{R}_{>0}$ be positive real numbers such that the time complexity of $\mathbf{Q}(\mathcal{I}, \mathcal{L})$ is given by $\mathcal{O}(H\times N^I \times M^L)$, for any image and caption datasets   $\mathcal{I}$ and $\mathcal{L}$ where $N$ and $M$ denote the number of crops per image and descriptions per caption, respectively. $H$ is the complexity of the image-caption matching procedure and depends only on the cardinality of $\mathcal{I}$ and $\mathcal{L}$. Then, the time complexity of $\text{LGCA}_{\mathbf{Q}}(\mathcal{I}, \mathcal{L})$ is $\mathcal{O}(H\times N^I\times M^L+HNM(\log M +\log N))$.
\end{theorem}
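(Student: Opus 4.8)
The plan is to bound the running time of $\text{LGCA}_{\mathbf{Q}}$ on a single image--caption pair and then sum over all pairs. Fix $(i,l)$. By construction $\text{LGCA}_{\mathbf{Q}}$ first runs exactly the cropping, weight--assignment, encoding, cross--alignment and summation that $\mathbf{Q}$ runs on $(i,l)$ --- call the cost of this the per--pair \emph{core} cost --- then performs $T=\lfloor\log_2 N\rfloor$ Expansion Steps, and finally forms $\text{Sim}(i,l)=\sum_{j=1}^{T}\alpha_j\,\texttt{score}^{(j)}$. So it suffices to (a) read off the core cost from the hypothesis, (b) bound the cost of the $j$-th Expansion Step, (c) sum over $j=1,\dots,T$, and (d) multiply by the number of pairs and compare with $H$.

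For (a), instantiating the hypothesis with singleton datasets $\mathcal{I}=\{i\}$ and $\mathcal{L}=\{l\}$ makes $H$ a constant, so the per--pair core operation of $\mathbf{Q}$ with $n$ crops and $M$ descriptions costs $\mathcal{O}(n^{I}M^{L})$; I will apply this with $n$ equal to the current number of crops at each step. For (b), the input crop set $\mathcal{C}_i^{(j-1)}$ of Expansion Step $j$ has size $n_j\le\lfloor N/2^{\,j-1}\rfloor$, since Step $j-1$ retained at most $\texttt{topK}_{j-1}=\lfloor N/2^{\,j-1}\rfloor$ indices and neither the spatial expansion nor the reweighting increases cardinality (with $n_1=N$). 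Step $j$ encodes $n_j$ images and $M$ descriptions, builds and sums the $n_j\times M$ cross--alignment matrix --- $\mathcal{O}(n_j^{I}M^{L})$ by (a) --- then extracts its $\texttt{topK}_j$ largest entries by sorting, at cost $\mathcal{O}\!\big(n_j M\log(n_j M)\big)$; deriving $\hat{\mathcal{C}}$, expanding its images and recomputing the weights are each $\mathcal{O}(n_j M)$ and hence absorbed. Thus Step $j$ costs $\mathcal{O}\!\big(n_j^{I}M^{L}+n_j M\log(n_j M)\big)$ with $n_j\le N/2^{\,j-1}$.

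For (c), summing over $j\ge 1$ the first term is at most $M^{L}N^{I}\sum_{k\ge 0}2^{-kI}=\mathcal{O}(N^{I}M^{L})$, a convergent geometric series for every real $I>0$ --- this is exactly why no extra logarithmic factor attaches to the $N^{I}M^{L}$ term --- while the second term is at most $\log(NM)\cdot M\sum_{j\ge 1}N/2^{\,j-1}=\mathcal{O}\!\big(NM(\log N+\log M)\big)$; adding the $\mathcal{O}(T)=\mathcal{O}(\log N)$ cost of combining the intermediate scores, the Expansion overhead per pair is $\mathcal{O}\!\big(N^{I}M^{L}+NM(\log N+\log M)\big)$. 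For (d), $\text{LGCA}_{\mathbf{Q}}$ handles at most $|\mathcal{I}|\cdot|\mathcal{L}|$ pairs, and since the matching procedure must inspect all $|\mathcal{I}|\cdot|\mathcal{L}|$ similarity values we have $|\mathcal{I}|\cdot|\mathcal{L}|=\mathcal{O}(H)$; hence the total cost of $\text{LGCA}_{\mathbf{Q}}(\mathcal{I},\mathcal{L})$ is that of $\mathbf{Q}(\mathcal{I},\mathcal{L})$, namely $\mathcal{O}(H N^{I}M^{L})$, plus $\mathcal{O}\!\big(H(N^{I}M^{L}+NM(\log N+\log M))\big)$ for all Expansion Steps and score combinations, which simplifies to $\mathcal{O}\!\big(H N^{I}M^{L}+H NM(\log M+\log N)\big)$, as claimed.

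I expect the main obstacle to be making step (a) precise --- cleanly extracting the per--pair core cost from the black--box hypothesis on $\mathbf{Q}(\mathcal{I},\mathcal{L})$ and justifying that one may substitute a reduced crop count $n_j$ into that cost --- together with the bookkeeping in (b) verifying that the crop--set cardinality really halves each round, so that the geometric series in (c) converges; everything after that is routine geometric--series estimation.
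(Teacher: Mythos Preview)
Your proposal is correct and follows essentially the same route as the paper: fix an image--caption pair, bound the cost of Expansion Step $j$ as a core $\mathbf{Q}$-call on $n_j\le \lfloor N/2^{\,j-1}\rfloor$ crops plus a sort of the $n_j\times M$ matrix, sum both contributions as geometric series, and then multiply by $H$ via the number of pairs. Your treatment is in fact slightly more careful than the paper's --- you justify substituting the reduced crop count $n_j$ into the black-box bound by instantiating the hypothesis on singleton $\mathcal{I},\mathcal{L}$, and you make the step $|\mathcal{I}|\cdot|\mathcal{L}|=\mathcal{O}(H)$ explicit --- but the decomposition and the key geometric-series estimate are identical.
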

\begin{proof}
    Assume that $\text{LGAC}_{\mathbf{Q}}(\mathcal{I}, \mathcal{L})$ has $T$ Expansion Steps. Consider a pair of image and caption $(x,y)$ and let $\mathcal{C}_x$, $\mathcal{D}_y$ be the set of cropped images and descriptions, respectively. Denote $\mathcal{C}_x^{(i)}$ the image set used at step $i$ for $i\in\{1,\dots,T\}$. Hence, following the definition of LGAC, we have $\mathcal{C}_x^{(1)}=\mathcal{C}_x$ and $|\mathcal{C}_x^{(i)}|\leq \Big\lfloor \dfrac{N}{2^{i-1}} \Big\rfloor \quad \forall i\in\{2,\dots,T\}$. Furthermore, for all $j\in \{1,\dots,T\}$, at Expansion Step $j$ of LGAC we essentially do 3 things:\\
    First, run $\mathbf{Q}(\mathcal{C}_x^{(j)})$ without the final image-caption matching procedure. The complexity of this procedure is at most  $\mathcal{O}\left(\Big\lfloor \dfrac{N}{2^{j-1}} \Big\rfloor^I \times M^L\right)$. Then, sort to find the top $\Big\lfloor \dfrac{N}{2^{j}} \Big\rfloor$ largest cosine similarity out of $|\mathcal{C}_x^{(1)}|\times M$ cross-alignment similarity score. The complexity of this procedure is $\mathcal{O}\left(|\mathcal{C}_x^{(1)}|M\log(|\mathcal{C}_x^{(1)}| M)\right)$, which is at most $\mathcal{O}\left(\Big\lfloor \dfrac{N}{2^{j}} \Big\rfloor M\log\left(\Big\lfloor \dfrac{N}{2^{j}} \Big\rfloor M\right)\right)$. Lastly, expand at most $\Big\lfloor \dfrac{N}{2^{j}} \Big\rfloor$ images with the largest similarity score. The complexity of this procedure is at most $\mathcal{O}\left(\Big\lfloor \dfrac{N}{2^{j}} \Big\rfloor\right)$. Hence, the total time complexity at  Expansion Step $j$ is 
    \begin{align*}
        \mathcal{O}\left(\Big\lfloor \dfrac{N}{2^{j-1}} \Big\rfloor^I \times M^L + \Big\lfloor \dfrac{N}{2^{j}} \Big\rfloor M\log\left(\Big\lfloor \dfrac{N}{2^{j}} \Big\rfloor M\right)\right)
    \end{align*}
    By combining for all $j\in \{1,\dots, T\}$ and adding the image-caption matching procedure, we yield the complexity of LGCA$_{\mathbf{Q}}$ as follows
    \begin{align}\label{eq: comp 1}
    \mathcal{O}\left(H\times\displaystyle\sum_{j=0}^{T-1}\Big\lfloor \dfrac{N}{2^{j}} \Big\rfloor^I \times M^L + H\times\displaystyle\sum_{j=1}^{T}\Big\lfloor \dfrac{N}{2^{j}} \Big\rfloor M\log\left(\Big\lfloor \dfrac{N}{2^{j}} \Big\rfloor M\right)\right)
    \end{align}
    Notice that
    \begin{align*}
        &\displaystyle\sum_{j=0}^{T-1}\Big\lfloor \dfrac{N}{2^{j}} \Big\rfloor^I \times M^L + \displaystyle\sum_{j=1}^{T}\Big\lfloor \dfrac{N}{2^{j}} \Big\rfloor M\log\left(\Big\lfloor \dfrac{N}{2^{j}} \Big\rfloor M\right)\\
        &\leq N^IM^L\left(\displaystyle\sum_{j=0}^{T-1} \dfrac{1 }{2^{Ij}}\right)+NM\left(\log N\left(\displaystyle\sum_{j=1}^{T} \dfrac{1 }{2^{j}}\right)+(\log M-\log(2))\left(\displaystyle\sum_{j=1}^{T} \dfrac{1 }{2^{j}}\right)\right)\\
        &\leq C(N^IM^L+NM(\log M +\log N))
    \end{align*}
    The last inequality is true by applying the inequality$\displaystyle\sum_{j=s}^t\dfrac{1}{2^j}<\dfrac{1}{2^{s-1}}\forall s,t\in\mathbb{Z}_{>0}$\\Thus, by combining with the result in \ref{eq: comp 1}, we conclude that the time complexity of LGAC$_{\mathbf{Q}}$ is $\mathcal{O}(HN^IM^L+HNM(\log M +\log N))$.
\end{proof}
In the literature, non-expanding baseline models typically have $I, L \ge 1$ due to the construction of the Cross-Alignment Matrix. Hence, Theorem~\ref{theo: main theorem} shows that for a non-expanding $\mathbf{Q}$, adding the expansion step increases the time complexity by at most a factor of $\log N+ \log M$, and in most cases remains essentially unchanged when $I, L > 1$. This demonstrates that in LGCA, although many Expansion steps are added, which significantly improve performance, the impact on time complexity is minimal or negligible. 
\noindent \begin{figure}[t]
    \centering
    \includegraphics[width=0.8\linewidth]{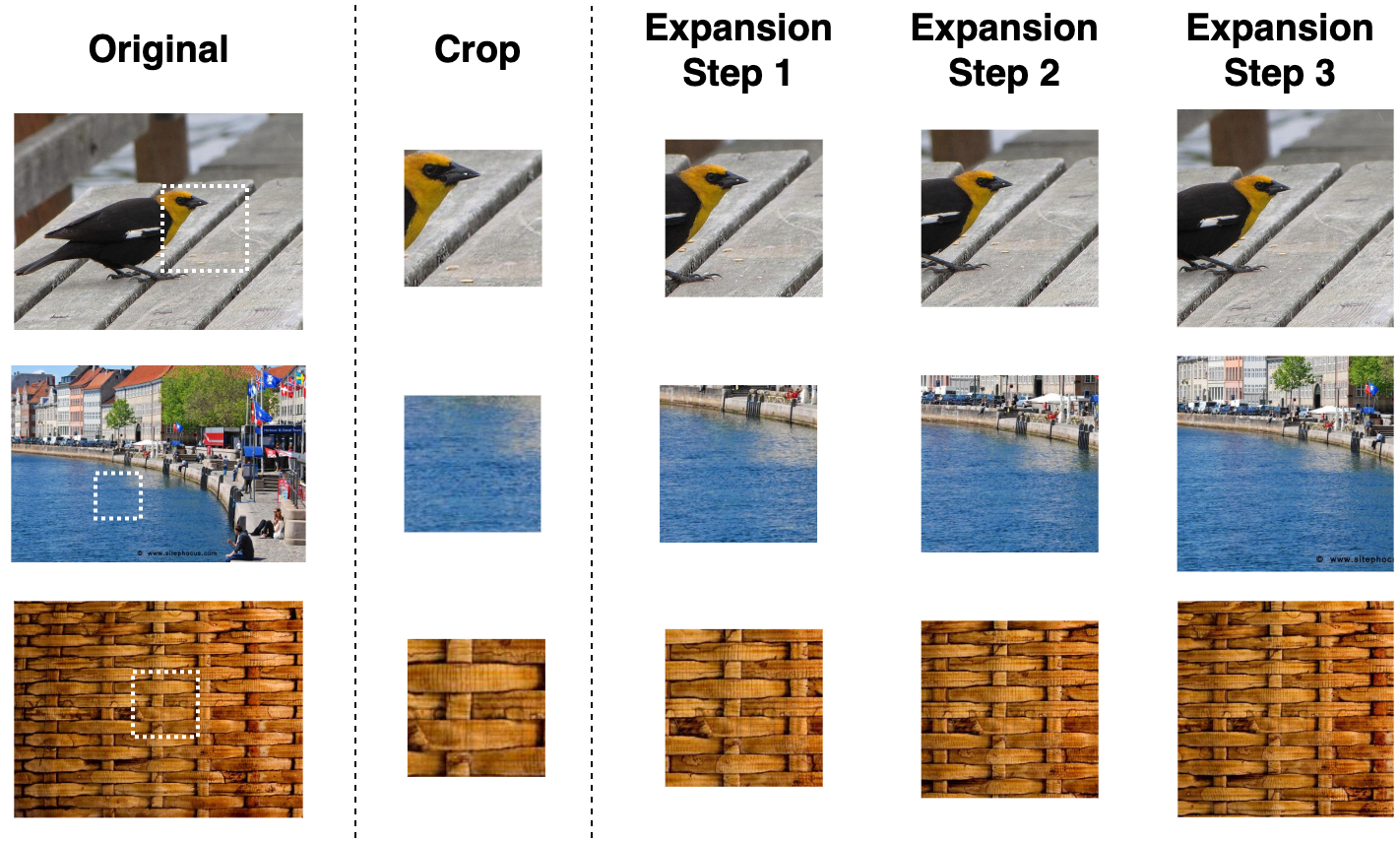}
    \caption{\textbf{Visualization of images through Expansion steps}. In each row, we use an image from CUB\_200\_2011, Place365, and the DTD dataset, respectively. The leftmost column shows the original image. The second column presents the cropped region, and the subsequent columns illustrate the progressively expanded regions.}
    \label{fig: dataset} 
\end{figure}
\section{Experiments} \label{sec: Exp}
\textbf{Datasets.} We test our method on five benchmark datasets for zero-shot classification: Oxford-IIIT Pets dataset \cite{Parkhi2012CatsAD} featuring common dog and cat species; CUB\_200\_2011 dataset \cite{WahCUB_200_2011} for fine-grained bird classification; DTD dataset \cite{Cimpoi_2014_CVPR} containing diverse in-the-wild textures; Food101 dataset \cite{bossard14} of food images; and Place365 dataset \cite{Zhou2018PlacesA1} designed for large-scale scene recognition.\\
\textbf{Baselines. } In the context of zero-shot image classification, we consider the baselines outlined in \cite{liwca2024}: CLIP \cite{radford2021learning}, which utilizes a simple template, \texttt{"A photo of \{class\}"}; CLIP-E, an ensemble variant of CLIP that customizes the prompt text for each task, for example on Oxford-IIIT Pets, \texttt{"a photo of a \{\}, a type of pet."}; CLIP-D \cite{menon2022visual}, which generates descriptions with the help of LLMs; CupL \cite{pratt2023does}, producing LLM-based descriptions of higher quality than CLIP-D; and Waffle \cite{roth2023waffling}, which replaces LLM-generated descriptions with randomly generated characters and words. Among these, the prompts for CLIP-D and CupL are sourced from the authors’ public repositories \cite{menon2022visual,pratt2023does}, while the remaining baselines use hand-crafted or code-generated prompts provided by \cite{liwca2024}.\\
\textbf{Parameters and Fine-tuning.}  
For the Cropping and Weight Assignment step, we employ the \texttt{RandomCrop} strategy, where the crop size is sampled uniformly from the range \((\alpha, \beta)\). Therefore, our method is controlled by two main parameters: the cropping ratio bounds \((\alpha, \beta)\) and the number of crops \(N\) generated per image. Following \cite{liwca2024}, the upper bound is fixed at \(\beta = 0.9\). The lower bound \(\alpha\) is dataset-specific: we set \(\alpha = 0.7\) for Place365, where capturing larger regions better reflects scene-level information, and \(\alpha = 0.5\) for all other datasets, where smaller crops help emphasize fine-grained object details. To increase regional diversity, each image is augmented with \(N = 100\) crops. For the Expansion step, our method introduces two additional hyperparameters. The first is the initial \texttt{topK}, which determines the number of highest-scoring crop–description pairs before expansion; we set \texttt{topK} = 10 to balance diversity with reliability. The second is the expansion margin \(\tau\), which is the scaling factor applied during expansion. We consider two values, \(\tau \in \{1.1, 1.25\}\), and select the one that aligns best with the dataset characteristics.\\
\textbf{Implementation details. } All experiments are carried out on a system with an 8-core CPU and 32 GB RAM, relying on the default multi-core setup to parallelize crop generation and evaluation. Each benchmark dataset is tested with two widely used CLIP backbones: ViT-B/32 and ViT-B/16. 
\begin{table*}[t]
\centering
\renewcommand*{\arraystretch}{1.0}
\caption{Zero-shot image classification accuracy (\%) of LGCA and baseline methods across datasets using two CLIP backbones (ViT-B/32 and ViT-B/16). Bold and underlined values denote the best and second-best results, respectively. $\Delta$ indicates the performance gain of LGCA over the strongest baseline.
}
\label{temp}
\resizebox{\textwidth}{!}{%
\begin{tabular}{cccccccccccccccccccccccc}
\toprule

\multirow{2}{*}{\textbf{Method}}
& \multicolumn{2}{c}{Oxford-IIIT Pets} 
& \multicolumn{2}{c}{CUB\_200\_2011} 
& \multicolumn{2}{c}{DTD} 
& \multicolumn{2}{c}{Food-101} 
& \multicolumn{2}{c}{Place365}\\
\cmidrule(lr){2-3} \cmidrule(lr){4-5} \cmidrule(lr){6-7} \cmidrule(lr){8-9} \cmidrule(lr){10-11} 
              & B/32 & B/16 & B/32 & B/16 & B/32 & B/16 & B/32 & B/16 & B/32 & B/16 \\
\midrule
CLIP     & 85.06 & 88.20 & 51.33 & 55.95 & 43.30 & 43.24 & 82.31 & 88.23 & 38.60    & 39.55 \\
CLIP-E   & 87.44 & 89.07 & 52.81 & 56.32 & 44.36 & 44.73 & 84.01 & 88.73 & 39.27    & 40.24 \\
CLIP-D   & 84.49 & 87.63 & 52.69 & 56.99 & 44.04 & 46.38 & 84.11 & 88.78 & 38.69   & 39.68 \\
Waffle   & 85.36 & 86.48 & 52.11 & 57.01 & 42.55 & 44.41 & 83.91 & 89.06 & 39.63    & 40.74 \\
CupL     & 87.38 & 91.69 & 49.67 & 54.26 & 47.55 & 47.82 & 84.08 & 88.87 & 38.83    & 39.93 \\
\midrule
WCA      & \underline{89.08} & \underline{92.05} & \underline{56.72} & \underline{59.63} & \underline{48.06} & \underline{50.53} & \underline{86.02} & \underline{89.83} &  \underline{40.26}   & \underline{40.95} \\
\midrule
Ours     & \textbf{90.13} & \textbf{92.97} & \textbf{57.47} & \textbf{61.27} & \textbf{48.25} & \textbf{50.74} & \textbf{86.35} & \textbf{89.95} & \textbf{40.52} & \textbf{41.08} \\
\midrule
$\Delta$ 
& \pos{+1.05} & \pos{+0.92} & \pos{+0.75} & \pos{+1.64} 
& \pos{+0.19} 
& \pos{+0.21} & \pos{+0.33} & \pos{+0.12} 
& \pos{+0.26} & \pos{+0.13} \\
\bottomrule
\end{tabular}
}
\label{tab: res}
\end{table*}
\subsection{Results}
In these experiments, we use classification accuracy as the evaluation metric, which measures the proportion of correctly predicted samples over the total number of samples. The results on standard zero-shot benchmarks are summarized in Table~\ref{tab: res}. Our method consistently outperforms all baselines across the evaluated datasets. The most substantial improvement is observed on CUB-200-2011, where we achieve a 1.64\% gain with ViT-B/16. On the Oxford-IIIT Pets dataset, our approach provides roughly +1\% improvement under both ViT configurations. On more challenging datasets such as DTD and Place365, which involve repetitive patterns and complex scenes (see Figure~\ref{fig: dataset}), our approach consistently achieves performance gains. These results highlight its robustness and adaptability across diverse and demanding dataset characteristics.

\section{Conclusion} \label{sec: Con}
In this work, we introduce LGCA, a framework for zero-shot image classification that first extracts local features and then iteratively selects and expands the most salient regions. This enables the model to capture both localized and global representations, avoiding confusion from small-scale similarities across distinct images. We demonstrate that LGCA maintains constant computational complexity even with multiple expansion steps, highlighting its efficiency. For future work, one can explore how this approach can be adapted to other modalities or how to generalize the Expansion Step to work for both image and caption.
\newpage

\section*{Acknowledgements}
This research is supported by research funding from Faculty of Information Technology, University of Science, Vietnam National University - Ho Chi Minh City. We would like to thank AISIA Lab for supporting us during this project.
%
% ---- Bibliography ----
%
% BibTeX users should specify bibliography style 'splncs04'.
% References will then be sorted and formatted in the correct style.
%
\bibliographystyle{splncs04}
\bibliography{samplepaper}

\end{document}